\newtheorem{theorem}{Theorem}
\title{Categorization Axioms for Clustering Results}
\author{ {\bf Jian YU } \\
	Beijing Key Lab
	of Traffic Data Analysis and Mining\\
	Beijing Jiaotong University,Beijing, China\\
	Email: jianyu@bjtu.edu.cn
	\And
	{\bf Zongben XU}  \\
	School of Mathematics and Statistics\\Xi'an Jiaotong University, Xi'an, China    \\
	Email: zbxu@mail.xjtu.edu.cn
}
\begin{document}

\maketitle

\begin{abstract}
	
Cluster analysis has attracted more and more attention in the field of machine learning and data mining. Numerous clustering algorithms have been proposed and are being developed due to diverse theories and various requirements of emerging applications. Therefore, it is very worth establishing an unified axiomatic framework for data clustering. In the literature, it is an open problem and has been proved very challenging. In this paper, clustering results are axiomatized by assuming that an proper clustering result should satisfy categorization axioms. The proposed axioms not only introduce classification of clustering results and inequalities of clustering results, but also are consistent with prototype theory and exemplar theory of categorization models in cognitive science. Moreover, the proposed axioms lead to three principles of designing clustering algorithm and cluster validity index, which follow many popular clustering algorithms and cluster validity indices.

\end{abstract}

\section{Introduction}

Categorization is a fundamental ability for people to cognise objects in the world~\citep{lako1987women}. It is always a wish that a computer can have categorization ability like human being. Generally speaking, when seeing the objects in the world, a computer can categorize them into some concepts, such as tree, sky, water, and cat, and so on. As for computer, categorization means that objects are grouped into categories, including classification and cluster analysis. Classification requires some objects with category label, cluster analysis needs no object with category label except that semi-supervised clustering needs some information about category label~\citep{basu2005semi}. In Big Data time, the number of unlabelled objects becomes more and more large so that cluster analysis plays a more important role as a pivotal part of exploratory data analysis. Up to now, cluster analysis can not be well formalized as~\citep{jain2010data,Everitt2011cluster,ackerman2012towards} pointed out, but it is well known that its elementary requirements are to divide $n$ objects into $c$ classes so that the objects in the same class are similar and the objects in different classes are dissimilar.

Data clustering has been extensively applied for diverse requirements, such as VLSI design~\citep{wei1989towards},topic discovery~\citep{hofmann1999probabilistic},community detection~\citep{newman2007mixture},image segmentation~\citep{goldberger2002unsupervised},collaborative filtering~\citep{hofmann2004latent},personalized recommendation~\citep{shepitsen2008personalized},and
so on. And many theories have been applied to design clustering algorithms, such as Gestalt theory~\citep{zahn1971graph},probability
theory~\citep{redner1984mixture},information theory~\citep{tishby1995information},rate-distortion theory~\citep{rose1990statistical},fuzzy logic~\citep{bezdek1974cluster},graph theory~\citep{wu1993optimal},cognitive science~\citep{shepard1979additive},matrix decomposition~\citep{xu2003document},game theory~\citep{pavan2007dominant},quantum mechanics~\citep{horn2001algorithm},etc. Therefore, it is worth developing an unified axiomatic framework for data clustering to deal with common clustering properties. An appropriated axiomatic framework for data clustering can help us understand the properties of clustering, and provide principles of designing clustering algorithm and cluster validity index.

There are three ways to deal with the axiomatization of cluster analysis in the literature. One way is to axiomatize clustering criterion (popular clustering criterion is objective function of a clustering algorithm). When admissible reformulation functions are taken as clustering criteria, ~\citep{karayiannis1999axiomatic} presented an axiomatic approach to soft learning vector quantization (LVQ) and clustering; when cost function is taken as clustering criterion,~\citep{puzicha2000theory} designed some kind of clustering algorithms to satisfy invariance (including permutation invariance, scale invariance, shift invariance), monotonicity and robustness for dissimilarity matrix. Second way is to consider a clustering algorithm as an a clustering function and axiomatize clustering function. When clustering function is defined as a mapping from a feature matrix to a partition matrix,~\citep{wright1973formalization} proposed twelve consistent axioms for clustering function, a few clustering algorithm is studied based on~\citep{wright1973formalization}'s clustering axioms as~\citep{wright1973formalization}'s clustering axioms seem to be too demanding for many clustering functions in the literature.  When clustering function is defined as a mapping from n objects with pairwise distances to a hard partition,~\citep{jardine1971mathematical} established an axiomatic framework of hierarchical clustering,~\citep{kleinberg2003impossibility} introduced a surprising but more highly influential impossibility theorem for clustering by three axioms: no clustering function satisfies scale invariance, richness and consistency. As~\citep{correa2013indication} pointed out, ~\citep{kleinberg2003impossibility}'s work is accepted as a rigorous proof of the difficulty in finding a unified framework for different clustering approaches. Hence, subsequent researches devoted to finding a possibility theorem for specific clustering algorithms by relaxing Kleinberg's axioms. ~\citep{zadeh2009uniqueness} proved that single linkage clustering algorithm can satisfy the relaxed Kleinberg's axioms. ~\citep{zadeh2010towards} has further characterized Single Linkage and Max-Sum based on relaxed Kleinberg's axioms.~\citep{correa2013indication} improved Kleinberg's results for specific clustering algorithms by considering the algorithm parameters.  Third way is to axiomatize clustering quality measure (cluster validity index),~\citep{ben2008measures} have proved that several clustering quality measures satisfy Kleinberg's axioms without impossibility.

Obviously, the above three ways of axiomatizing cluster analysis do not characterize clustering results but clustering criterion, clustering function or clustering quality measure respectively. In the literature, Wright~\citep{wright1973formalization} has
questioned what is the nature of the results of cluster analysis, but reduced it to cluster quality (cluster validity) issue. Von Luxburg and Ben-David~\citep{von2005towards} have stated that an interesting clustering result should have a large distance to predefined ¡°uninteresting¡± clustering results such as the trivial clustering results, and pointed out that it should be defined what a clustering result should not be and such a concept has not yet been studied from a theoretical point of view. Gollapudi, Kumar, and Sivakumar~\citep{gollapudi2006programmable} have made a try to define what constitutes an acceptable clustering result from a pairwise distance point of view but focused on finding some efficient algorithms. To the best of our knowledge, no works on axiomatizing clustering result has been done in the literature. In this paper, we devote to developing an axiomatic framework of qualitatively characterizing clustering results, consistent with human cognition.

The major contributions of this paper are as follows:

1) Based on the definition of cluster (dis)similarity mapping, representation of clustering result is proposed.

2) Clustering results are axiomatized by  three categorization axioms.

3) The proposed axioms can not only classify clustering results into proper clustering, overlapping clustering and improper clustering, but also classify partition into proper partition, overlapping partition and improper partition. Moreover, boundary set is theoretically defined based on sample separation axiom.

4) Sample separation axiom can follow several inequalities about clustering results.

5) The proposed axioms can follow three principles of developing clustering algorithm and three principles of designing cluster validity index,which can not only deduce many famous clustering algorithms such as C-means, Classification Maximum Likelihood, Model based clustering, but also review several cluster validity indices.

The rest of the paper is organized as below: In section 2, how to represent a clustering result is studied. In section 3, the categorization axioms for clustering results are introduced, and the properties of clustering results are studied based on the proposed axioms. In section 4, how to design a clustering algorithm is discussed, and some related clustering algorithms are discussed. Section 5 discusses the relation between separation axioms and cluster validity index. In the final, the conclusions and some discussions are presented.

\section{Representation of Clustering Results}

Data clustering usually consists of four parts: data representation, clustering criterion, clustering algorithm, cluster validity. As discussed in Section 1, there are some good works in the literature to devoting to axiomatization of clustering criterion, clustering function or clustering quality measure. However, little attention has been paid to axiomatizing clustering results based on data representation in the literature. In this paper, we will study how to represent a clustering result when the given data set $X=\{x_1,x_2,\cdots,x_n\}$ is grouped into $c$ clusters $X_1,X_2,\cdots,X_c$.

For a clustering result, it has two kinds of cluster representation: cognitive cluster representation and extensional cluster representation. Cognitive cluster representation is originated from intensional definition of cluster, which is denoted by $\underline{X}=\{\underline{X_1},\underline{X_2},\cdots,\underline{X_c}\}$, where $\underline{X_i}$ represents $X_i$. Extensional cluster representation is based on extensional definition, which is represented by $U=[u_{ik}]_{c\times n}$, where  $u_{ik}\geq 0$  represents the membership that $x_k$ belongs to cluster $\underline{X_i}$.  

As for cognitive cluster representation, it needs to define what a cluster is in essence. Obviously, to define a cluster is equivalent to define a category, but~\citep{wittgenstein1953philosophical} had questioned the assumption that important concepts of categories could be  defined in a unified classical definition. In the following, cluster and category are interchangeable in this paper. As a classical definition requires a necessary and sufficient characteristics, what a cluster is cannot have a unified classical definition~\citep{Everitt2011cluster}. Cognitive scientists have developed different theories of categorization to represent categories such as prototype theory ~\citep{rosch1978principles}  and exemplar theory~\citep{medin1978context}, which provide the foundation for cognitive category representation.

According to prototype theory, a category can be represented by a prototype and a prototype is usually represented by features. Therefore, the cognitive category representation $\underline{X_i}$ for a cluster $X_i$ can be represented by $v_i=[v_{i1},v_{i2},\cdots,v_{i\tau}]$, $\tau$ is the dimensionality of cluster feature, which may be different from the dimensionality of the data set $X$. Hence, cognitive category representation of a clustering result $\underline{X}=\{\underline{X_1},\underline{X_2},\cdots,\underline{X_c}\}$ can be represented by a cluster feature matrix  $V=[v_{io}]_{c \times \tau}=[v_1^T,v_2^T,\cdots, v_c^T]^T$.

Not all cognitive category representation of clustering results of prototype based clustering algorithms can be represented by cluster feature matrix. In the literature, sometimes a prototype refers to an ideal object~\citep{murphy2004big}. For example,  prototype based clustering algorithm affinity propagation~\citep{frey2007clustering} assumes that a cluster is represented by one object, and its clustering result cannot be represented by cluster feature matrix.

According to exemplar theory, a category is represented by multiple exemplars, such a cognitive category representation is called exemplar cluster representation. In the literature of cluster analysis, exemplar cluster representation includes tree and graph connected component. Tree representation results in hierarchical clustering~\citep{sneath1957application}, a connected component represents a cluster for minimum cut~\citep{wei1989towards},Markov clustering algorithm~\citep{van2000graph},and support vector clustering~\citep{ben2002support}.  However, not all clustering algorithms have an explicit cognitive cluster representation, for example, pairwise data clustering~\citep{hofmann1997pairwise},Information based clustering~\citep{slonim2005information},dominant set clustering~\citep{pavan2007dominant}, and so on. Under such a case, we assume a latent cognitive cluster representation exists, which is still represented by $\underline{X}=\{\underline{X_1},\underline{X_2},\cdots,\underline{X_c}\}$.  In this paper, it is always supposed that a cognitive category representation exists for a cluster.

As for extensional cluster representation $U$, it gives only information about partition. In mathematical terms,   a clustering result can be represented by a partition matrix $U =[u_{ik}]_{c \times n}$. In the literature, various constraints on partition matrix result in different partitions as follows:

 {\bf Hard Partition:}, If $\sum_{i=1}^{c} u_{ik}=1, u_{ik} \in \{0,1 \}, \forall i, 1\leq \sum_{k=1}^{n} u_{ik} < n$, then $U_{h} =[u_{ik}]_{c \times n}$ is called hard partition.

 {\bf Soft Partition:}  If   $\sum_{i=1}^{c} u_{ik}=1, 0\leq u_{ik} \leq 1,$ and $ \forall i, 0 < \sum_{k=1}^{n} u_{ik} < n$. then $U_{s} =[u_{ik}]_{c \times n}$ is called soft partition.

{\bf Possibilistic Partition:} If  $\forall k, \sum_{i=1}^{c} u_{ik} > 0, u_{ik} \geq 0$, and $\forall 1 \leq i \leq c, \sum_{k=1}^{n} u_{ik} > 0$, then $U_{p} =[u_{ik}]_{c \times n}$ is called possibilistic partition.

 Depending on the type of partition, clustering algorithm can be classified into hard clustering and soft clustering.

{\bf Hard Clustering:} If a clustering algorithm outputs  a hard partition, it is called hard clustering.

{\bf Soft Clustering:} If  a clustering algorithm outputs a soft or a possibilistic partition, it is called soft clustering.

In theory, cognitive category representation should have categorization ability. According to cognitive science, categorization is based on the similarity of an object to the underlying cluster representation.  Based on this idea, a category similarity mapping is introduced in order to measure the similarity between objects and clusters.

 {\bf Category similarity mapping:} \\
  $Sim$: $X\times \{\underline{X_1},\underline{X_2},\cdots,\underline{X_c}\} \mapsto R_{+}$ is called category similarity mapping if larger $Sim(x_k,\underline{X_i})$ means more similar between $x_k$ and $\underline{X_i}$, smaller $Sim(x_k,\underline{X_i})$ means less similar between $x_k$ and $\underline{X_i}$ .

 Similarly, category dissimilarity mapping can be defined as follows:

  {\bf Category dissimilarity mapping:} \\
  $Ds$: $X\times \{\underline{X_1},\underline{X_2},\cdots,\underline{X_c}\} \mapsto R_{+}$ is called category dissimilarity mapping if larger $Ds(x_k,\underline{X_i})$ means less similar between $x_k$ and $\underline{X_i}$, smaller $Ds(x_k,\underline{X_i})$ means larger similar between $x_k$ and $\underline{X_i}$.

According to the above analysis, a clustering result should be associated with a category similarity mapping  $Sim$ or a category dissimilarity mapping $Ds$. Usually, category (dis)similarity mapping is given by its corresponding clustering algorithm. For example, $\forall i \forall k ( Ds(x_k,\underline{X_i})= \| x_k-v_i\|^2)$  holds for C-means, where $\underline{X_i}=v_i$; $\forall i \forall k ( Sim(x_k,\underline{X_i})= max_{x_l \in X_i}s_{kl})$  holds for single linkage, where $\underline{X_i}=X_i$. Therefore, a clustering result can be represented by ($\underline{X},U,Sim$) or ($\underline{X},U,Ds$).

\section{Separation Axioms and Categorization Equivalency Principle for Clustering Results}

 Clearly, not all partitions correspond to a clustering result ($\underline{X},U,Sim$) or ($\underline{X},U,Ds$).What makes a clustering result different from a partition? At least, a clustering result should satisfy the elementary requirements of cluster analysis, but a partition does not have such constraints. In other words, the elementary requirements of cluster analysis can be considered as the common properties of clustering results, which means two facts:
 \begin{itemize}
   \item For each object, its intra cluster similarity should be greater than its inter cluster similarity. It means each object can be categorized into one cluster and only one cluster according to cluster similarity mapping. Otherwise, there exist one object belonging to two and more clusters, in other words, some objects in the different clusters are more similar than some objects in the same cluster, which contradicts the elementary requirements of cluster analysis as it requires that objects in the same cluster should be more similar, objects in different clusters should be more dissimilar. This fact is called sample separation.
   \item For each cluster, there exists at least one object, its intra cluster similarity with respect to this cluster should be greater than its inter cluster similarity in order to keep each cluster not null. Otherwise, there exists one cluster, no object belong to this cluster in essence. It also contradicts the elementary requirements of cluster analysis as it requires that a cluster in a clustering result at least has one object in order to make a partition. This fact is called
   cluster separation.
 \end{itemize}

Transparently, sample separation and category separation are self-evident for a clustering result and can be considered as axioms for clustering results. In mathematical terms, if ($\underline{X},U,Sim$) is a clustering result of the data set $X$, then it satisfies two separation axioms.

{\bf 1) Sample Separation Axiom:}
$\forall k \exists i$ $\forall j( (j\neq i) \rightarrow (Sim(x_k,\underline{X_i}) > Sim(x_k,\underline{X_j})))$.

{\bf 2) Category Separation Axiom:}
 $\forall i\exists k$ $\forall j((j\neq i) \rightarrow (Sim(x_k,\underline{X_i}) > Sim(x_k,\underline{X_j})))$.

 For a clustering result ($\underline{X},U, Sim$), ($\underline{X}, Sim$) result in a categorization of the dat set $X$ by separation axioms, and the corresponding partition $U$ also represents a categorization. For a clustering result, its cognitive category representation should have the same categorization ability as its extensional cluster representation in common sense. Therefore,  a clustering result ($\underline{X},U, Sim$) of the data set $X=\{x_1,x_x,\cdots,x_n\}$ should satisfy {\bf categorization equivalency axiom} as follows.

          $\forall k  ( \arg \max_i u_{ik} = \arg \max_i Sim(x_k,\underline{X_i}) )$.

Therefore,  a cognitive category representation is equivalent to a partition for a clustering result when categorizing an object if categorization equivalency axiom holds. In this sense, a clustering result can be represented by its corresponding cognitive category representation or its corresponding partition for the sake of brevity in practice. Sometimes, $\forall i \forall k (Sim(x_k,\underline{X_i})=u_{ik})$, categorization equivalency axiom becomes a tautology,  a clustering result can be represented by $U$.  Roughly speaking, separation axioms are more essential requirements for a clustering result.

When a clustering result of the data set $X$ is represented by ($\underline{X},U, Ds$), two separation axioms can be expressed as follows:

{\bf 1) Sample Separation Axiom:}
$\forall k \exists i$ $\forall j ((j\neq i) \rightarrow (Ds(x_k,\underline{X_i}) < Ds(x_k,\underline{X_j}))$).

{\bf 2) Category Separation Axiom:}
 $\forall i\exists k$ $\forall j  ((j\neq i) \rightarrow (Ds(x_k,\underline{X_i}) < Ds(x_k,\underline{X_j}))$).

Similarly,  {\bf categorization equivalent axiom} can be expressed as follows:

    $\forall k (\arg \max_i u_{ik} = \arg \min_i Ds(x_k,\underline{X_i}) )$.

Considered the equivalence between dissimilarity and similarity in some sense, a clustering algorithm usually uses one. In this paper, separation axioms are used to denote the above two cases.

Based on the above analysis,  several properties about clustering results can be proved such as Theorem \ref{Hardclustering} and \ref{propertiesclusterseparation}.

\begin{theorem}
\label{Hardclustering}
If a clustering result is $(\underline{X},U,Sim)$  where $U=[u_{ik}]_{c\times n}$, $u_{ik}\in \{0,1\}$ and categorization equivalency axiom holds, then  that $U$ is hard partition is equivalent to that separation axioms hold for $(\underline{X},U,Sim)$.
\end{theorem}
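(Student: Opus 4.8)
The plan is to prove the two implications separately by unpacking the definitions, the bridge between them being that under the categorization equivalency axiom the index sets $\arg\max_i u_{ik}$ and $\arg\max_i Sim(x_k,\underline{X_i})$ coincide for every $k$. Throughout I read $\arg\max_i$ as the \emph{set} of maximizing cluster indices, and I take $c\ge 2$ (the case $c=1$ is degenerate and is discussed at the end). Two elementary observations will be used repeatedly. First, since $u_{ik}\in\{0,1\}$, the set $\arg\max_i u_{ik}$ is a singleton $\{i_0\}$ if and only if $u_{i_0k}=1$ and $u_{jk}=0$ for all $j\ne i_0$, equivalently $\sum_{i=1}^c u_{ik}=1$; if instead all entries of column $k$ are $0$ then $\arg\max_i u_{ik}=\{1,\dots,c\}$, which is not a singleton when $c\ge 2$. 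Second, the sample separation axiom says precisely that for every $k$ the set $\arg\max_i Sim(x_k,\underline{X_i})$ is a singleton, and the category separation axiom says precisely that every index $i$ arises as that singleton for at least one $k$.

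For ``hard partition $\Rightarrow$ separation axioms'': if $U$ is a hard partition then each column sums to $1$, so for each $k$ there is a unique $i(k)$ with $u_{i(k)k}=1$ and hence $\arg\max_i u_{ik}=\{i(k)\}$; by categorization equivalency $\arg\max_i Sim(x_k,\underline{X_i})=\{i(k)\}$ too, which is exactly the sample separation axiom with witness $i(k)$. For category separation, the hard-partition requirement $\sum_{k=1}^n u_{ik}\ge 1$ supplies, for each cluster $i$, some $k$ with $u_{ik}=1$; then $\arg\max_i Sim(x_k,\underline{X_i})=\{i\}$ as above, so this $k$ witnesses category separation for cluster $i$.

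For the converse ``separation axioms $\Rightarrow$ hard partition'': by sample separation, for each $k$ there is a unique $Sim$-maximizing index $m(k)$, so $\arg\max_i Sim(x_k,\underline{X_i})=\{m(k)\}$; categorization equivalency forces $\arg\max_i u_{ik}=\{m(k)\}$, and by the first observation this gives $u_{m(k)k}=1$, $u_{jk}=0$ for $j\ne m(k)$, hence $\sum_{i=1}^c u_{ik}=1$. Together with the hypothesis $u_{ik}\in\{0,1\}$ this is the column condition of a hard partition. Next, by category separation, for each cluster $i$ there is some $x_k$ with $\arg\max_i Sim(x_k,\underline{X_i})=\{i\}$, so by the same reasoning $u_{ik}=1$ and therefore $\sum_{k=1}^n u_{ik}\ge 1$: no cluster is empty. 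Finally $\sum_{k=1}^n u_{ik}<n$ is automatic, for if $u_{ik}=1$ for all $k$ then --- the columns summing to $1$ --- every other cluster would be empty, contradicting the previous line (this is the only place $c\ge 2$ is needed). Hence $U$ meets all defining conditions of a hard partition.

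The whole argument is essentially bookkeeping; the one point worth stating carefully is the equivalence ``singleton $\arg\max$ of a $\{0,1\}$-column $\iff$ that column sums to $1$,'' which is what converts the uniqueness statements in the separation axioms into the combinatorial constraints defining a hard partition. The only genuinely delicate issue is the tacit hypothesis $c\ge 2$: when $c=1$ all three axioms hold vacuously, yet the all-ones column matrix is not a hard partition, so the statement as written needs $c\ge 2$.
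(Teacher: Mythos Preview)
Your proof is correct and follows essentially the same route as the paper's: both directions are obtained by unpacking the definitions and using the categorization equivalency axiom to transfer the singleton-$\arg\max$ property between the $u_{ik}$'s and the $Sim$-values. The paper's proof is considerably terser (the converse is dismissed with ``Similarly''), whereas you spell out the key bookkeeping lemma that for a $\{0,1\}$-column the $\arg\max$ is a singleton iff the column sums to $1$, and you also verify the upper bound $\sum_k u_{ik}<n$ explicitly. Your observation that the statement tacitly requires $c\ge 2$ is a genuine point the paper does not address: with $c=1$ the separation axioms are vacuous while the hard-partition row condition $\sum_k u_{1k}<n$ is unsatisfiable, so the equivalence fails.
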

\begin{proof}
	If $U$ is a hard partition, then $\sum_{i=1}^{c} u_{ik}=1, u_{ik} \in \{0,1 \}, \forall i, 1\leq \sum_{k=1}^{n} u_{ik} < n$.
	$\sum_{i=1}^{c} u_{ik}=1, u_{ik} \in \{0,1 \}$ implies that $\forall k \exists i \forall j ((j\neq i)\rightarrow (u_{ik}>u_{jk}))$. According to categorization equivalency axiom, $\forall k \exists i \forall j ((j\neq i) \rightarrow( Sim(x_k,\underline{X_i}) > Sim(x_k,\underline{X_j})))$, which means sample separation axiom holds.$\forall i \forall k$ $u_{ik} \in \{0,1 \}, \forall i ( 1\leq \sum_{k=1}^{n} u_{ik} < n)$ follows that $\forall i \exists k \forall j ((j\neq i) \rightarrow(u_{ik}>u_{jk}))$. By categorization equivalency axiom, $\forall i \exists k \forall j ((j\neq i) \rightarrow( Sim(x_k,\underline{X_i}) > Sim(x_k,\underline{X_j})))$, which means category separation axiom holds.  Therefore, if  $U$ is a hard partition, separation axioms hold for $(\underline{X},U,Sim)$.
	
	Similarly, if separation axioms hold for $(\underline{X},U,Sim)$, then $U$ is a hard partition.
\end{proof}
\begin{theorem}
\label{propertiesclusterseparation}
If a clustering result $(\underline{X},U,Sim)$  satisfies category separation axiom, then we have two conclusions.
\begin{enumerate}
  \item $\forall i \forall j$   $((i\neq j)\rightarrow (\underline{X_i} \neq \underline{X_j}) )$.
  \item There exists at least $c$ objects $x_{k_i}$  such that $\forall i \forall j ((i\neq j)\rightarrow (x_{k_i} \neq x_{k_j})) $.
\end{enumerate}
\end{theorem}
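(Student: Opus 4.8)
The plan is to derive both conclusions directly from the Category Separation Axiom by contradiction, exploiting the fact that $Sim(x_k,\underline{X_i})$ is a function of the cluster representation $\underline{X_i}$ alone, so equal representations force equal similarity values.

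For the first conclusion, I would fix $i\neq j$ and suppose $\underline{X_i}=\underline{X_j}$. Then $Sim(x_k,\underline{X_i})=Sim(x_k,\underline{X_j})$ for every object $x_k$, since $Sim$ only sees the representation. On the other hand, applying the Category Separation Axiom to the index $i$ yields an object $x_{k_i}$ with $Sim(x_{k_i},\underline{X_i}) > Sim(x_{k_i},\underline{X_j})$ (this is the instance of the inner universally quantified $j$ at the value $j$ we fixed). These two statements are incompatible, so $\underline{X_i}\neq\underline{X_j}$, which is exactly conclusion 1.

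For the second conclusion, I would apply the Category Separation Axiom once for each $i\in\{1,\dots,c\}$ to obtain objects $x_{k_1},\dots,x_{k_c}$ such that for all $j\neq i$ we have $Sim(x_{k_i},\underline{X_i}) > Sim(x_{k_i},\underline{X_j})$. The claim is that these $c$ objects are pairwise distinct. Suppose $x_{k_i}=x_{k_j}$ for some $i\neq j$. Instantiating the defining property of $x_{k_i}$ at the index $j$ gives $Sim(x_{k_i},\underline{X_i}) > Sim(x_{k_i},\underline{X_j})$, and instantiating the defining property of $x_{k_j}$ at the index $i$ gives $Sim(x_{k_j},\underline{X_j}) > Sim(x_{k_j},\underline{X_i})$; since $x_{k_i}=x_{k_j}$, these two strict inequalities contradict each other. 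Hence all $c$ objects differ, giving conclusion 2.

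There is essentially no deep obstacle here; the only things to watch are the logical bookkeeping — the existential witness $x_{k_i}$ supplied by the axiom for cluster $i$ must be used with the correct instantiation of the inner universal over $j$, and conclusion 2 needs all $c$ witnesses simultaneously rather than a single one. It is also worth stating explicitly that $Sim$ takes $\underline{X_i}$ as its argument, so conclusion 1 is genuinely about the representations and not merely about the index set; this is what legitimizes the substitution $\underline{X_i}=\underline{X_j}\Rightarrow Sim(\cdot,\underline{X_i})=Sim(\cdot,\underline{X_j})$. The same argument transfers verbatim to the dissimilarity formulation $(\underline{X},U,Ds)$ with the inequalities reversed.
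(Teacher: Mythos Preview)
Your proposal is correct and follows essentially the same route as the paper's own proof: both conclusions are obtained by contradiction, the first from the substitution $\underline{X_i}=\underline{X_j}\Rightarrow Sim(\cdot,\underline{X_i})=Sim(\cdot,\underline{X_j})$ clashing with the strict inequality provided by the axiom, and the second from the mutually contradictory strict inequalities obtained when two witnesses $x_{k_i}$ and $x_{k_j}$ coincide. Your additional remarks on why the substitution is legitimate and on the $Ds$ variant are helpful clarifications but do not change the underlying argument.
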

\begin{proof}
	For a clustering result $(\underline{X},U,Sim)$, there exist $i\neq j$ such that $\underline{X_i}=\underline{X_j}$. . According to category separation axiom, for cluster $X_i$, there exists an object $x_{k_i}$ such that $Sim(x_{k_i},\underline{X_i}) > Sim(x_{k_i},\underline{X_j})$.  However, $\underline{X_i}=\underline{X_j}$ means that $Sim(x_{k_i},\underline{X_i}) = Sim(x_{k_i},\underline{X_j})$.  It is a contradiction. In other words,  the first conclusion is proved.
	
	Similarly, if category separation axiom  hold,  for cluster $X_i$ there exists an object $x_{k_i}$ such that $\forall j ((j\neq i)\rightarrow (Sim(x_{k_i},\underline{X_i}) > Sim(x_{k_i},\underline{X_j})))$.  If there exist $i\neq j$  such that $x_{k_i}=x_{k_j}$, then $Sim(x_{k_i},\underline{X_i}) > Sim(x_{k_i},\underline{X_j})$ and $Sim(x_{k_j},\underline{X_j}) > Sim(x_{k_j},\underline{X_i})$. Since $x_{k_i}=x_{k_j}$,
	it means that $Sim(x_{k_i},\underline{X_i}) > Sim(x_{k_j},\underline{X_j})$ and $Sim(x_{k_j},\underline{X_j}) > Sim(x_{k_i},\underline{X_i})$, which is a contradiction.  Therefore, the second conclusion is proved.  Hence, the proof is finished.
\end{proof}

\subsection{Classification of Clustering Results and Boundary Set}

The proposed separation axioms can classify clustering results into three types as follows:

{\bf Proper clustering:} If a clustering result follows  separation axioms, such a clustering result is proper.

{\bf Overlapping clustering:} If a clustering result obeys category separation axiom but violates sample separation axiom, it is called overlapping clustering.

{\bf Improper clustering:} If a clustering result violates category separation axiom, it is called improper clustering.

 Obviously, proper clustering and overlapping clustering are useful in practice. In cluster analysis community, overlapping clusters are usually taken as meaningful~\citep{jardine1971mathematical} and~\citep{ding2004k}, and have real applications such as categorization in cognitive science~\citep{shepard1979additive}, community detection in complex networks~\citep{palla2005uncovering}, movie recommendation~\citep{banerjee2005model}, etc.

 However, a good clustering result is not expected to be an improper clustering. In particular, a clustering algorithm is not expected to generate improper clustering results when the given data set has a well clustered structure. Improper clustering has different cases. Two special cases of improper clustering can be defined as follows.

 {\bf Coincident clustering:}  For $\underline{X}=\{\underline{X_1},\underline{X_2},\cdots,\underline{X_c}\}$, if $\exists i \exists j ((i\neq j) \wedge(\underline{X_i}=\underline{X_j} ))$, it can be called coincident clustering.

 {\bf Totally coincident clustering:} For $\underline{X}=\{\underline{X_1},\underline{X_2},\cdots,\underline{X_c}\}$, if $\forall i \forall j (\underline{X_i}=\underline{X_j}) $, it is called totally coincident clustering.

 By categorization equivalency axiom, we can offer a classification of partition according to the the classification of clustering results as follows:

 {\bf Proper partition:}  a partition $U=[u_{ik}]_{c\times n}$ is proper if $\forall k\exists i$$\forall j $ $((j\neq i)\rightarrow (u_{ik} > u_{jk}))$ and $\forall i\exists k$ $\forall j ((j\neq i)\rightarrow ( u_{ik} > u_{jk}))$.

 {\bf Overlapping partition:}  a partition $U=[u_{ik}]_{c\times n}$ is overlapping if $\exists k\exists  i \exists j ((j\neq i)\wedge
 (u_{ik}=u_{jk}=\max_{\iota} u_{\iota k} ))$ and $\forall i\exists k$ $\forall j ((j\neq i)\rightarrow (u_{ik} > u_{jk}))$.

 {\bf Improper partition:}  a partition $U=[u_{ik}]_{c\times n}$ is improper if $\exists i \forall k \exists j ((j\neq i)\wedge
 (u_{ik}\leq u_{jk} ))$ .

 More detailed, several special cases of improper partition can be defined as follows.

 {\bf Covering partition:} If a partition $U=[u_{ik}]_{c \times n}$ satisfies $\exists i \exists j ((i\neq j)\wedge \forall k (u_{ik}\leq u_{jk}))$, $U=[u_{ik}]_{c \times n}$  is called covering partition.

 {\bf Coincident partition:} If a partition $U=[u_{ik}]_{c \times n}$ satisfies $\exists i \exists j ((i\neq j) \wedge \forall k (u_{ik}=u_{jk}))$, $U=[u_{ik}]_{c \times n}$  is called coincident partition.

 {\bf Uninformative partition:} $U_{\pi}=[\pi_1,\pi_2,\cdots,\pi_c]^T \otimes {\bf 1}_{1\times n}$ is called uninformative partition, where $\otimes$ represents Kronecker product, ${\bf 1}$ denotes the vector of all 1's.

 {\bf Absolute uninformative partition:}  $U_{c^{-1}}=[c^{-1}]_{c\times n}$ is called absolute uninformative partition.

 In the literature, many clustering algorithms have been reported to produce improper clustering. For example, Possibilistic C-means has the undesirable tendency to produce coincident clustering~\citep{barni1996comments}, absolute uninformative partition has been pointed out to be a fixed point of EM for Gaussian Mixture~\citep{figueiredo2000unsupervised}, and similar result has been discovered by~\citep{newman2007mixture} when applying probabilistic mixture models and the expectation-maximization algorithm into detecting community, General C-means has coincident clustering with mild conditions, and many algorithms can output totally coincident clustering, which is usually the mass center of the given data set, details can be seen in~\citep{yu2005general}.

 According to the above analysis, separation axioms are independent, which can be proved as Theorem \ref{sample separation and cluster separation independ}.

\begin{theorem}
	\label{sample separation and cluster separation independ}
	For soft clustering,  separation axioms are independent of each other when categorization equivalency axiom holds.
\end{theorem}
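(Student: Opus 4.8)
The plan is to prove independence by construction: for each of the two separation axioms I would exhibit a soft clustering result that satisfies categorization equivalency, satisfies that one axiom, and violates the other. Since neither implication holds, the two axioms are independent under categorization equivalency. Throughout I would use the simplification already pointed out after the categorization equivalency axiom, namely take $Sim(x_k,\underline{X_i})=u_{ik}$ (so the axiom becomes a tautology) and posit abstract pairwise-distinct cognitive representations $\underline{X_1},\ldots,\underline{X_c}$, whose existence is assumed in Section~2. Under this choice each separation axiom becomes a purely combinatorial statement about the matrix $U$: sample separation says every column of $U$ has a unique strict maximum, while category separation says every row index is the unique strict maximum of at least one column.

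First I would produce a witness for ``category separation holds, sample separation fails.'' Take $c=2$, $n=3$ and the partition whose columns are $(0.5,0.5)^{T}$, $(0.9,0.1)^{T}$, $(0.1,0.9)^{T}$. One verifies the soft-partition constraints ($\sum_i u_{ik}=1$, $0\le u_{ik}\le 1$, and $0<\sum_k u_{ik}<n$ for each $i$, here $\sum_k u_{1k}=\sum_k u_{2k}=1.5\in(0,3)$). The first column is tied, so sample separation fails at $x_1$; but cluster $1$ is the strict argmax of column $2$ and cluster $2$ of column $3$, so category separation holds. Hence category separation does not entail sample separation. (This is exactly an instance of the ``overlapping clustering'' class already introduced.)

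Next I would produce a witness for ``sample separation holds, category separation fails.'' Take $c=2$, $n=2$ and the partition with columns $(0.6,0.4)^{T}$, $(0.7,0.3)^{T}$. The soft-partition constraints again hold, and crucially $\sum_k u_{2k}=0.7$ lies strictly in $(0,2)$ without any row being forced up to membership $1$. Every column has a unique strict maximum (cluster $1$ in both cases), so sample separation holds; but no column has cluster $2$ as its strict maximum, so category separation fails for cluster $2$. Hence sample separation does not entail category separation. It is precisely this construction that is impossible for a hard partition — there a row with zero column sum would be forced, violating $1\le\sum_k u_{ik}$ — which is why the theorem is restricted to soft clustering and why it is consistent with Theorem~\ref{Hardclustering}. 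If one wants the complete $2\times2$ table of combinations, a proper clustering gives ``both hold'' and a partition with one tied column and one dominated row gives ``both fail.''

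I do not expect a genuine obstacle here: the whole argument is a construction. The only points requiring care are (i) checking that each exhibited matrix really satisfies all the soft-partition constraints, in particular the strict inequalities $0<\sum_k u_{ik}<n$, so that the examples are legitimate soft clustering results rather than hard or degenerate ones; (ii) verifying the claimed strictness/ties of the column maxima so that the separation axioms behave as stated; and (iii) making explicit why the second construction has no hard-partition analogue, since that observation is the real reason ``soft clustering'' appears in the hypothesis.
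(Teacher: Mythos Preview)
Your proposal is correct and follows essentially the same strategy as the paper: exhibit one clustering result satisfying sample separation but not category separation, and another satisfying category separation but not sample separation. The paper's own proof names the two witnesses more abstractly --- the uninformative partition $U_{\pi}$ with pairwise distinct $\pi_i$ for the first direction, and the overlapping-clustering class for the second --- whereas you build concrete small matrices (your second example is not literally uninformative, since its columns differ, but it achieves the same effect of a single globally dominant row); the underlying argument is identical.
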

\begin{proof}
	According to the previous analysis, uninformative partition and overlapping clustering correspond to some clustering results.
	For $U_{\pi}=[\pi_1,\pi_2,\cdots,\pi_c]^T \otimes {\bf 1}_{1\times n}$, if $\forall i\neq j, \pi_i\neq \pi_j$, then $U_{\pi}$ follows sample separation axiom but violates category separation axiom.
	As for overlapping clustering, category separation axiom hold but violates sample separation axiom.  Therefore, the proof is finished.
\end{proof}

In order to further study overlapping clustering, we introduce the definition of boundary set as follows.

{\bf Boundary set:} If a data set with $n$ objects is clustered into $c$ groups for the category similarity mapping $Sim$: $X\times \{\underline{X_1},\underline{X_2},\cdots,\underline{X_c}\} \mapsto R_{+}$, the boundary set is defined as (\ref{simsepaxiom}).
\begin{equation}
\label{simsepaxiom}
B_{Sim}(X)=\{ x_{k} | \mid arg\max_{1\leq i\leq c} Sim(x_k,\underline{X_i})\mid>1 \} \end{equation}
where $\mid   X \mid$ represents the cardinality of a set $X$. Similarly, the boundary set can be defined by category dissimilarity mapping.

When the boundary set is not empty, sample separation axiom does not hold. In practice, the boundary set may be important sometimes~\citep{huang1999fuzzy}.  In theory,~\citep{ben2008relating} studied the relation between the cluster boundary and the stability of a clustering algorithm from a probability point of view.  Since different clustering algorithms have different category similarity mappings, the boundary set will depends on  the given data set and the clustering algorithm. In the future, the relation between overlapping clustering and boundary set will be further studied.

\subsection{Inequalities on Clustering Results}

Separation axioms not only classify clustering results in theory, but also follow several inequalities on clustering results such as Theorem \ref{Simprodsum} and \ref{dSimprodsum}.

\begin{theorem}
	\label{Simprodsum}
	Let $(\underline{X},U,Sim)$ be a clustering result for the given data set $X=\{x_1,x_2,\cdots,x_n\}$.
	If categorization axioms hold, the inequality (\ref{simprod}),  (\ref{simsum}),(\ref{simmixture}) and (\ref{simgeneralmeans}) hold.
	\begin{equation}\label{simprod}
	\prod_{k}Sim(x_k,\underline{X_{\varphi(k)}})\geq \prod_{k}Sim(x_k,\underline{X_{\phi(k)}})
	\end{equation}
	\begin{equation}\label{simsum}
	\sum_{k}Sim(x_k,\underline{X_{\varphi(k)}})\geq \sum_{k} Sim(x_k,\underline{X_{\phi(k)}})
	\end{equation}
	\begin{equation}\label{simmixture}
	\prod_{k}Sim(x_k,\underline{X_{\varphi(k)}})\geq \prod_{k}\sum_{i} \alpha_i Sim(x_k,\underline{X_i})
	\end{equation}
	\begin{equation}\label{simgeneralmeans}
	\sum_{k}Sim(x_k,\underline{X_{\varphi(k)}})\geq \sum_{k} f(\sum_{i}\alpha_{i}g(Sim(x_k,\underline{X_{i}})))
	\end{equation}
	where $\varphi(k)=arg\max_{i} u_{ik}$, $\phi(k)$ is a function from $\{1,2,\cdots, n\}$ to $\{1,2,\cdots,c\}$, $\alpha_i>0$ and $\sum_{i=1}^c\alpha_i=1$, $f$ is a convex function,  $\forall t\in R_{+},f(g(t))=t$.
\end{theorem}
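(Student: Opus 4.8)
The plan is to reduce all four inequalities to one pointwise fact and then aggregate it in the appropriate way. First I would extract from the hypotheses that, for each $k$, the index $\varphi(k)=\arg\max_i u_{ik}$ is the \emph{unique strict} maximiser of $i\mapsto Sim(x_k,\underline{X_i})$: the categorization equivalency axiom gives $\arg\max_i u_{ik}=\arg\max_i Sim(x_k,\underline{X_i})$, and the sample separation axiom says precisely that there is a single $i$ with $Sim(x_k,\underline{X_i})>Sim(x_k,\underline{X_j})$ for all $j\ne i$, which must be that common argmax $\varphi(k)$. Hence for an arbitrary map $\phi\colon\{1,\dots,n\}\to\{1,\dots,c\}$ and every $k$ we obtain the pointwise bound $Sim(x_k,\underline{X_{\varphi(k)}})\ge Sim(x_k,\underline{X_{\phi(k)}})\ge 0$, with equality precisely when $\phi(k)=\varphi(k)$; the nonnegativity is because $Sim$ maps into $R_{+}$.

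Given this, (\ref{simsum}) is immediate by summing the pointwise bound over $k$, and (\ref{simprod}) follows by multiplying it over $k$, using the elementary fact that $a_k\ge b_k\ge 0$ for all $k$ implies $\prod_k a_k\ge\prod_k b_k$ (so no strict positivity of $Sim$ is needed). For (\ref{simmixture}) the extra observation is that a convex combination cannot exceed the maximum: since $\alpha_i>0$ and $\sum_i\alpha_i=1$, for each $k$ we have $\sum_i\alpha_i\, Sim(x_k,\underline{X_i})\le Sim(x_k,\underline{X_{\varphi(k)}})$, and both sides are nonnegative, so taking the product over $k$ yields (\ref{simmixture}).

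For (\ref{simgeneralmeans}) I would first apply the finite form of Jensen's inequality to the convex function $f$, with weights $\alpha_i$ and arguments $g(Sim(x_k,\underline{X_i}))$, and then cancel using the hypothesis $f(g(t))=t$ on $R_{+}$: $f\!\big(\sum_i\alpha_i\, g(Sim(x_k,\underline{X_i}))\big)\le \sum_i\alpha_i\, f\!\big(g(Sim(x_k,\underline{X_i}))\big)=\sum_i\alpha_i\, Sim(x_k,\underline{X_i})\le Sim(x_k,\underline{X_{\varphi(k)}})$, the last step being the mean-below-max bound from the previous paragraph. Summing over $k$ gives (\ref{simgeneralmeans}).

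I do not expect a genuine obstacle: once the pointwise maximality of $\varphi(k)$ is read off from categorization equivalency and sample separation, everything is bookkeeping (monotonicity of sums and of products of nonnegatives, the mean-below-max inequality, and Jensen). The one point needing a word of care is the Jensen step in (\ref{simgeneralmeans}): one must know that all arguments $g(Sim(x_k,\underline{X_i}))$ lie in an interval on which $f$ is convex — this is the natural reading of the joint hypothesis ``$f$ convex and $f\circ g=\mathrm{id}$ on $R_{+}$'' — so that the weighted Jensen inequality is legitimate. It is also worth remarking that under the axioms the bound $\sum_i\alpha_i\, Sim(x_k,\underline{X_i})\le Sim(x_k,\underline{X_{\varphi(k)}})$ is in fact strict whenever $c\ge 2$, so (\ref{simmixture}) and (\ref{simgeneralmeans}) hold strictly; this is what links these bounds to the cluster-validity criteria developed later in the paper.
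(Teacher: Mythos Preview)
Your proposal is correct and follows essentially the same approach as the paper: establish the pointwise maximality $Sim(x_k,\underline{X_{\varphi(k)}})\geq Sim(x_k,\underline{X_i})$ from the categorization equivalency and sample separation axioms, then aggregate by summation or multiplication for (\ref{simprod})--(\ref{simsum}), use the mean-below-max bound for (\ref{simmixture}), and invoke Jensen's inequality together with $f\circ g=\mathrm{id}$ for (\ref{simgeneralmeans}). Your added remarks on strictness and on the domain of validity for the Jensen step are sensible refinements, but the core argument matches the paper's proof step for step.
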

\begin{proof}
	1) Since categorization axioms holds, $\varphi(k)=arg\max_{i}u_{ik}$ is well defined and $\forall k (\arg \max_i u_{ik} = \arg \max_i Sim(x_k,\underline{X_i}) )$, thus the inequality $Sim(x_k,\underline{X_{\varphi(k)}})\geq Sim(x_k,\underline{X_{\phi(k)}})\geq 0$ must hold. Therefore, the inequality (\ref{simprod})  can be easily proved by multiplying the above inequality according to subscript $k$ from 1 to n.
	
	2) Similarly, the inequality (\ref{simsum}) can be proved.
	
	3) As $Sim(x_k,\underline{X_{\varphi(k)}})\geq Sim(x_k,\underline{X_{i}})\geq 0, \forall i$ , therefore, we know that inequality (\ref{alphasimsep}) holds.
	\begin{equation}
	\label{alphasimsep}
	\alpha_i Sim(x_k,\underline{X_{\varphi(k)}})\geq \alpha_i Sim(x_k,\underline{X_{i}})\geq 0.
	\end{equation}
	
	By summarizing inequality (\ref{alphasimsep}) according to subscript $i$ from 1 to c, we can get inequality (\ref{sumalphasimsep}).
	\begin{equation}
	\label{sumalphasimsep}
	\sum_i \alpha_i Sim(x_k,\underline{X_{\varphi(k)}})\geq \sum_{i}\alpha_i Sim(x_k,\underline{X_{i}})\geq 0.
	\end{equation}

	As $\sum_{i=1}^c\alpha_i=1$, the inequality (\ref{simmixture})  can be easily proved by multiplying the inequality (\ref{sumalphasimsep}) according to subscript $k$ from 1 to n.
	
	4) As $f$ is a convex function, we know that inequality (\ref{generalmeanssimsep}) holds.
	\begin{equation}
	\label{generalmeanssimsep}
	\sum_{i}\alpha_{i}f(g(Sim(x_k,\underline{X_{i}})))\geq f(\sum_{i}\alpha_{i}g(Sim(x_k,\underline{X_{i}})))
	\end{equation}
	Since  $\forall t\in R_{+},f(g(t))=t$, we know inequality (\ref{generalmeanssimsep}) becomes inequality (\ref{genmeanssimsep}).
	\begin{equation}
	\label{genmeanssimsep}
	\sum_{i}\alpha_{i}Sim(x_k,\underline{X_{i}})\geq f(\sum_{i}\alpha_{i}g(Sim(x_k,\underline{X_{i}})))
	\end{equation}
	By similar analysis as above, we can get inequality (\ref{simgeneralmeans}).
	
	In summary, the proof of Theorem \ref{Simprodsum} is finished.
\end{proof}

By similar analysis of Theorem \ref{Simprodsum}, we can prove Theorem \ref{dSimprodsum}.

\begin{theorem}
	\label{dSimprodsum}
	Let $(\underline{X},U,Ds)$ be a clustering result for the given data set $X=\{x_1,x_2,\cdots,x_n\}$.
	If categorization axioms hold, the inequality (\ref{dsimsum}), (\ref{dsimmixture}),(\ref{dsimprod}) and (\ref{dsimmixprod}) hold.
	\begin{equation}\label{dsimsum}
	\sum_{k}Ds(x_k,\underline{X_{\varphi(k)}})\leq \sum_{k} Ds(x_k,\underline{X_{\phi(k)}})
	\end{equation}
	\begin{equation}\label{dsimmixture}
	\sum_{k}Ds(x_k,\underline{X_{\varphi(k)}})\leq \sum_{k}f(\sum_{i} \alpha_i g(Ds(x_k,\underline{X_i})))
	\end{equation}
	\begin{equation}\label{dsimprod}
	\prod_{k}Ds(x_k,\underline{X_{\varphi(k)}})\leq \prod_{k} Ds(x_k,\underline{X_{\phi(k)}})
	\end{equation}
	\begin{equation}\label{dsimmixprod}
	\prod_{k}Ds(x_k,\underline{X_{\varphi(k)}})\leq \prod_{k}f(\sum_{i} \alpha_i g(Ds(x_k,\underline{X_i})))
	\end{equation}
	where $\varphi(k)=arg\max_{i} u_{ik}$, $\phi(k)$ is a function from $\{1,2,\cdots, n\}$ to $\{1,2,\cdots,c\}$, $\forall t\in R_{+},f(g(t))=t$, and $f$ is a concave function, $\alpha_i>0$ and $\sum_{i=1}^c\alpha_i=1$.
\end{theorem}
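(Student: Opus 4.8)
The plan is to mirror the structure of the proof of Theorem \ref{Simprodsum}, since Theorem \ref{dSimprodsum} is just its dissimilarity analogue with the sense of the inequalities reversed and the role of convexity replaced by concavity. The key observation, which comes directly from the categorization axioms and the sample separation axiom, is the pointwise inequality
\begin{equation}
\label{dsimpointwise}
0\leq Ds(x_k,\underline{X_{\varphi(k)}})\leq Ds(x_k,\underline{X_i})\quad\text{for all }i,k,
\end{equation}
where $\varphi(k)=\arg\max_i u_{ik}=\arg\min_i Ds(x_k,\underline{X_i})$ is well defined because categorization axioms hold (so that $\varphi$ selects the unique minimizer of $Ds(x_k,\cdot)$). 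Everything else is bookkeeping built on \eqref{dsimpointwise}.

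First I would establish \eqref{dsimpointwise} and, as an immediate special case with $i=\phi(k)$, the inequality $Ds(x_k,\underline{X_{\varphi(k)}})\leq Ds(x_k,\underline{X_{\phi(k)}})$. Summing this over $k=1,\dots,n$ gives \eqref{dsimsum}; taking products over $k$ (all terms nonnegative) gives \eqref{dsimprod}. Next, multiplying \eqref{dsimpointwise} by $\alpha_i>0$ and summing over $i$, using $\sum_i\alpha_i=1$, yields $Ds(x_k,\underline{X_{\varphi(k)}})\leq \sum_i\alpha_i Ds(x_k,\underline{X_i})$. To reach \eqref{dsimmixture} and \eqref{dsimmixprod} I then insert the Jensen step: since $f$ is concave and $f\circ g=\mathrm{id}$ on $R_+$, we have $\sum_i\alpha_i Ds(x_k,\underline{X_i})=\sum_i\alpha_i f(g(Ds(x_k,\underline{X_i})))\leq f(\sum_i\alpha_i g(Ds(x_k,\underline{X_i})))$, so $Ds(x_k,\underline{X_{\varphi(k)}})\leq f(\sum_i\alpha_i g(Ds(x_k,\underline{X_i})))$ pointwise in $k$. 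Summing over $k$ gives \eqref{dsimmixture}; taking products over $k$ gives \eqref{dsimmixprod}, provided the right-hand factors are nonnegative, which holds because $f(\sum_i\alpha_i g(Ds(x_k,\underline{X_i})))\geq Ds(x_k,\underline{X_{\varphi(k)}})\geq 0$.

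There is no serious obstacle here; the argument is routine once \eqref{dsimpointwise} is in place. The only points that need a little care are: (i) checking that $\varphi$ is genuinely well defined, i.e. that sample separation guarantees a \emph{strict} unique minimizer — this is exactly what the sample separation axiom for $Ds$ asserts, combined with categorization equivalency linking $\arg\max_i u_{ik}$ to $\arg\min_i Ds(x_k,\underline{X_i})$; (ii) making sure all quantities being multiplied are nonnegative so that products preserve the inequality direction, which follows from $Ds$ mapping into $R_+$ together with the chain of inequalities just derived; and (iii) recording that the Jensen inequality flips relative to Theorem \ref{Simprodsum} precisely because $f$ is now concave rather than convex. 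I would then conclude that \eqref{dsimsum}, \eqref{dsimmixture}, \eqref{dsimprod} and \eqref{dsimmixprod} all hold, finishing the proof.
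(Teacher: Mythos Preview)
Your proposal is correct and is exactly the approach the paper intends: the paper does not give a separate proof but simply states that Theorem~\ref{dSimprodsum} follows ``by similar analysis of Theorem~\ref{Simprodsum},'' and your argument carries out precisely that analogy, reversing the inequalities via the pointwise bound $Ds(x_k,\underline{X_{\varphi(k)}})\leq Ds(x_k,\underline{X_i})$ and replacing the convex Jensen step by its concave counterpart.
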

Theorem \ref{Simprodsum} and \ref{dSimprodsum} gives some qualitatively properties of clustering results when categorization axiom hold. Clearly,  Theorem \ref{Simprodsum} and  \ref{dSimprodsum} show that the clustering result should reach the optimum of some functions, which gives some new interpretations of clustering algorithms. In Section 4.2, we will further discuss this issue.

\subsection{Related Works}

If $U$ is a hard partition and the clustering result $(\underline{X},U,Sim)$ satisfies sample separation axiom and categorization equivalency axiom, then objects belonging to the cluster $X_i$ have the common property: their similarities to the cognitive category representation $\underline{X_i}$ are the maximal. In common sense, objects are similar to each other just because they have some common property. In this respect, objects in the same cluster can be considered similar to each other with respect to that above common property.  From this point of view, sample separation axiom is consistent with Gestalt laws of grouping~\citep{wertheimer1923untersuchungen} as Gestalt laws of grouping state that objects should be grouped together if they are similar to each other.  Generally speaking, sample separation axiom explains quantitatively in what sense objects in the same cluster are similar to each other.

Moreover, sample separation axiom is consistent with two famous categorization theories in cognitive science: prototype theory of categorization~\citep{rosch1978principles}, exemplar theory of categorization~\citep{medin1978context}. Prototype theory of categorization tells us that one object is categorized as one cluster A not other clusters just because it is more similar to the prototype of cluster A than it is to those of other clusters when a cluster is represented by a prototype. Exemplar theory of categorization tells us that one object is categorized as one cluster A not other clusters just because it is more similar to multiple exemplars of cluster A than it is to those multiple exemplars of other clusters when a cluster is represented by multiple exemplars. When $U$ is a hard partition, prototype theory of categorization and exemplar theory of categorization are consistent with sample separation axiom. From the point of view in sample separation axiom, the difference between prototype theory and exemplar theory is cluster representation, and sample separation axiom seems to be a generalization of prototype theory and exemplar theory. Transparently, separation axioms present the common property that cluster results should satisfy.

In addition, some properties of category separation axiom have been pointed out in the literature. For example, when axiomatizing clustering function,~\citep{wright1973formalization} has taken it for granted that $c$ clusters at least have $c$ different objects when a given data set is divided into $c$ clusters, which is a property of category separation axiom.  When studying clustering techniques,~\citep{bonner1964some} though that it was desirable to have a core for each of the other cluster sets that are reasonably different from each other, which is also another property of cluster separation axiom. In particular,~\citep{wright1973formalization} pointed out a hard partition should satisfy mutually exclusive, totally exhaustive and nonempty, which can be guaranteed by separation axioms and categorization equivalency axiom as shown in Theorem \ref{Hardclustering}.

\section{Principles of Developing Clustering Algorithms}

The common properties of clustering results can offer some principles to develop a clustering algorithm. For a clustering result, it should satisfy categorization equivalency axiom, sample separation axiom and cluster separation axiom. Inversely, three principles of developing a clustering algorithm can be presented based on categorization equivalency axiom, sample separation axiom and cluster separation axiom. In the following, we will discuss such three principles respectively.

\subsection{Categorization Equivalency Principle}

According to categorization equivalency axiom, its cognitive category representation is equivalent to its extensional cluster representation for a clustering result with respect to $Sim$ or $Ds$. For a clustering result,  its extensional cluster representation can be generated by its cognitive category representation with category similarity mapping. Inversely, the cognitive category representation can be determined by the corresponding extensional cluster representation and the given data set. Therefore, a clustering result should be a convergence point for the above mentioned iterated process. Considered the extensional cluster representation is a partition, a common idea of developing a clustering algorithm is to alternatively update the partition and the cognitive category representation, such as~\citep{runkler1999alternating,macqueen1967some} and so on. When such iterated process converges, it is easy to know that categorization equivalency axiom holds for the outputted clustering result.  Hence,  a framework of iterative clustering algorithm is developed based on categorization equivalency axiom as Algorithm \ref{alg:clustering Framwork}.

\begin{algorithm}[H]
	\caption{ Framework of Iterative Clustering Algorithm.}
	\label{alg:clustering Framwork}
	\begin{algorithmic}[1] %
		\REQUIRE ~~\\
		$I(X)$ represents the data set $X$,\\
		Initial Partition $U$;\\
		Convergence Thresholds;\\
		Free Parameters;
		\REPEAT
		\STATE  Update $\underline{X_1},\underline{X_2},\cdots,\underline{X_c} $ by a partition U;
		\label{code: partition representation}%
		\STATE Update U by cluster similarity mapping $Sim(x_k,\underline{X_i})$;
		\label{code: partition}
		\UNTIL {Convergence thresholds are satisfied, output U or $\{\underline{X_1},\underline{X_2},\cdots,\underline{X_c} \}$}
		\label{code:stop}
	\end{algorithmic}
\end{algorithm}

In the literature, many clustering algorithms can be described by Algorithm \ref{alg:clustering Framwork}. How to define a cognitive category representation and a cluster similarity mapping  is the core of developing a clustering algorithm. Different cluster representations and different category similarity mappings lead to diverse clustering algorithms.  For example, Gath-Geva clustering~\citep{gath1989unsupervised},  Alternating Cluster estimation~\citep{runkler1999alternating} are consistent with categorization equivalency axiom. As two examples of Algorithm \ref{alg:clustering Framwork}, we just rewrite Single linkage and C-means as Algorithm \ref{alg:Single Linkage} and \ref{alg:C-means}. More interestingly, C-means is a typical algorithm based on prototype theory of categorization, and single linkage is a typical algorithm based on exemplar theory of categorization.
\begin{algorithm}[H]
	\caption{ Single Linkage Clustering Algorithm.}
	\label{alg:Single Linkage}
	\begin{algorithmic}[1] %
		\REQUIRE ~~\\
		Similarity matrix $S(X)$ represents the data set $X$,\\
		Initial Partition $U=I_n$, where n is the number of objects;\\
		Convergence Thresholds: c denotes the number of clusters;\\
		\REPEAT
		\STATE  Update $\underline{X_1}=X_1,\underline{X_2}=X_2,\cdots,\underline{X_c}=X_c $ by a partition U;
		\STATE Update U by merging $X_i$ and $X_j$ if $\{i,j\}=arg\max_{j\neq i}\max_{x_k\in X_j}Sim(x_k,\underline{X_i})$, where $Sim(x_k,\underline{X_i})=\max_{x_l\in X_i}s_{kl}$,;
		\UNTIL {when the number of iterations reaches $n-c$, output U}
		\label{code:stop}
	\end{algorithmic}
\end{algorithm}

\begin{algorithm}[H]
	\caption{ C-means Clustering Algorithm.}
	\label{alg:C-means}
	\begin{algorithmic}[1] %
		\REQUIRE ~~\\
		Feature matrix $F(X)$ represents the data set $X$.\\
		Initial Partition $U$;\\
		Convergence Thresholds: IT denotes the maximum number of iterations;\\
		Free Parameter: c denotes the number of clusters
		\REPEAT
		\STATE  Update $\underline{X_1}=mean(X_1)$, $\underline{X_2}=mean(X_2)$, $\cdots$, $\underline{X_c}=mean(X_c) $ by a partition U;
		\STATE Update U by $u_{ik}=1$ if $k=arg\min_{ i} Ds(x_k,\underline{X_i})$, otherwise $u_{ik}=0$, where $Ds(x_k,\underline{X_i})= \| x_k-\underline{X_i}\|$;
		\UNTIL {when the number of iterations reaches IT, output U}
		\label{code:stop}
	\end{algorithmic}
\end{algorithm}

\subsection{Cluster Compactness Principle}

A clustering result should abide by sample separation axiom. However, sample separation axiom just is a minimum requirement for a clustering result. In theory, it is not enough for a good clustering result to just follow sample separation axiom in borderline. Theoretically,  a good clustering result should keep away from violating sample separation axiom as far as possible.  In other words, it is very important for every object that the intra cluster similarity should be greater than the inter cluster similarity as much as possible. As larger intra cluster similarity means less inter cluster similarity, it is natural for a good clustering result to make the intra cluster similarity as larger as possible.  Obviously, the larger intra cluster similarity, the more compact a clustering result. Consequently, when designing a clustering algorithm, cluster compactness principle can be stated as:

{\bf Cluster Compactness Principle:} A clustering algorithm should make its clustering result as compact as possible.

More detailed,  cluster compactness principle requires that the maximum similarity between one object and its corresponding cluster should be as larger as possible, which is also equivalent to the minimum dissimilarity between one object and its corresponding cluster should be as small as possible. In other words, the meaning of compactness in cluster compactness principle is maximal intra cluster similarity or minimal intra cluster variance.

Certainly, cluster compactness principle can be directly used to design an clustering criterion  when cluster compactness is defined. Likely, different requirements lead to different definitions of cluster compactness. A general definition of cluster compactness criterion can be defined as follows.

{\bf Cluster Compactness Criterion:} $J_{C}: \{X\}\times \{\{\underline{X_1},\underline{X_2},\cdots,\underline{X_c}\}| \forall i, \underline{X_i} \; \textrm{represents}\; X_i\} $ $\mapsto R_{+}$ is called cluster compactness criterion if the optimum of $J_{C}(X,\{\underline{X_1},\underline{X_2},\cdots,\underline{X_c}\})$ corresponds to the clustering result with the largest cluster compactness.

According to Theorem \ref{Simprodsum} and \ref{dSimprodsum}, some cluster compactness criteria can be developed in theory.  When $\forall i,\underline{X_{i}}$ is fixed,  $\prod_{k}Sim(x_k,\underline{X_{\varphi(k)}})$ can be considered as one kind of maximal intra cluster similarity where $\varphi(k)=arg\max_{i}Sim(x_k,\underline{X_{i}})$. By Theorem \ref{Simprodsum},  the right term in inequality (\ref{simprod}) can be considered as a cluster compactness measure for any partition, its maximum can reach the left term in inequality (\ref{simprod}).  In other words, the right term in inequality (\ref{simprod}) can be chosen as a clustering criterion. Interestingly, maximizing such a clustering criterion can directly lead to Classification maximum likelihood under assumptions as follows.

Let the objects in cluster $X_i$  follow the distribution $p(x, \underline{X_i})$, where $\underline{X_i}$ is the distribution parameter,
$ 1\leq i\leq c$  respectively. Set $Sim(x_k,\underline{X_i})=p(x_k, \underline{X_i})$, Theorem \ref{Simprodsum}  requires that the clustering criterion (\ref{CML}) reaches the maximum in order to make the clustering result satisfy sample separation axiom.
\begin{equation}\label{CML}
CML=\prod_{k}p(x_k,\underline{X_{\phi(k)}}),
\end{equation}
where $\phi(k)$ is a function from $\{1,2,\cdots, n\} $ to $\{1,2,\cdots,c\}$.

Therefore, maximizing $\prod_{k}p(x_k, \underline{X_{\phi(k)}})$ is equivalent to maximizing $\sum_{k}\log p(x_k,\underline{X_{\phi(k)}})$,
which results in Classification Maximum Likelihood approach for cluster analysis~\citep{celeux1993comparison} .

Similarly, setting $Sim(x_k,\underline{X_i})=p(x_k, \underline{X_i})$,  the inequality (\ref{simmixture}) of Theorem \ref{Simprodsum} can lead to maximum likelihood approach for model based clustering as follows.
\begin{equation}
\label{Maximum likelihood}
\prod_{k}p(x_k,\underline{X_{\varphi(k)}})\geq \prod_{k}\sum_{i} \alpha_{i}p(x_k,\underline{X_i})
\end{equation}

Obviously, the right term of the inequality (\ref{Maximum likelihood}) is the famous likelihood of mixture model~\citep{redner1984mixture}, which shows many model based clustering algorithms follow cluster compactness principle.

Likely, the right term of in inequality (\ref{dsimsum})  in Theorem \ref{dSimprodsum} represent intra cluster variance and can be considered as a clustering criterion. Minimizing such a clustering criterion can directly lead to C-means if setting $Ds(x, \underline{X_i})=\|x-v_i\|^2$, Theorem \ref{dSimprodsum} requires minimizing of the clustering criterion (\ref{CMeans}) as follows:
\begin{equation}\label{CMeans}
\min_{\phi}\sum_{k}\|x_k-v_{\phi(k)}\|^2,
\end{equation}
It is easily proved that the optimal $\phi$ for minimizing the clustering criterion (\ref{CMeans}) is
$\varphi(k)=arg\min_{i}\|x_k-v_i\|^2$.
Obviously, minimizing of the clustering criterion (\ref{CMeans}) is equivalent to minimizing the clustering criterion of C-means.
By the same method, Theorem \ref{dSimprodsum} also demands to minimize the clustering criterion of General C-means~\citep{yu2005general} as follows.
\begin{equation}\label{GCMeans}
\sum_{k}f(\sum_{i} \alpha_i g(\|x_k-v_{i}\|^2)),
\end{equation}

In addition, maximal intra cluster similarity is equivalent to minimal intra cluster variance in some cases.  In particular,
if setting $p(x,\underline{X_i})=\kappa\exp(\|x-v_i\|^2/\sigma)$, where $\kappa,\sigma$ are constant, negative log of the cluster criterion (\ref{CML}) of Classification Maximum Likelihood can be rewritten as (\ref{CMLCMeans}).
\begin{equation}\label{CMLCMeans}
\sum_{k}\|x_k-v_{\phi(k)}\|^2/\sigma -n\log\kappa
\end{equation}

Minimizing (\ref{CMLCMeans}) is equivalent to minimizing the clustering criterion (\ref{CMeans}) of C-means. Under such assumption, maximal intra cluster similarity is equivalent to minimal intra cluster variance.

Certainly, Theorem \ref{Simprodsum} and \ref{dSimprodsum} offer new explanations for C-means, General c-means, Classification maximum likelihood and model based clustering, but not all cluster compactness criteria can be inferred from Theorem \ref{Simprodsum} and \ref{dSimprodsum} such as~\citep{clauset2004finding}. In Table \ref{cluster compactness algorithms}, several clustering algorithms are developed based on cluster compactness principle, including C-means,K-modes~\citep{huang1998extensions}, fuzzy c-means,  General stochastic block model~\citep{shen2011exploring}, Model based clustering~\citep{fraley2002model}, CNM~\citep{clauset2004finding} and so on.

In addition, Theorem \ref{Simprodsum} and \ref{dSimprodsum} can lead to some new clustering criteria.  In the following, two new clustering algorithms are presented based on Theorem \ref{Simprodsum} in order to show that the proposed axioms are helpful to design  new clustering algorithms.

Let $f=x^m$  where  $m\geq 1$, the right term of inequality (\ref{simgeneralmeans}) in Theorem \ref{Simprodsum} can be used an the objective function (\ref{Sampleweightedsim}) of a new clustering algorithm as follows:

\begin{equation}\label{Sampleweightedsim}
\sum_{k} (\sum_{i}\alpha_{i}Sim(x_k,\underline{X_i})^{\frac{1}{m}})^m
\end{equation}
In theory, maximizing (\ref{Sampleweightedsim}) can lead to new different clustering algorithms depending on how to define $Sim(x_k,\underline{X_i})$.

Assume the data set $X$ can be represented by a feature matrix. Set $Sim(x_k,\underline{X_i})=\exp(-\beta^{-1}\|x_k-v_i\|^2)$, then (\ref{Sampleweightedsim}) becomes (\ref{Sampleweightedgauss}).
\begin{equation}\label{Sampleweightedgauss}
\sum_{k} (\sum_{i}\alpha_{i}\exp(-(m\beta)^{-1}\|x_k-v_i\|^2))^m
\end{equation}
The update equations of maximizing (\ref{Sampleweightedgauss}) can be written as follows:
\begin{equation}\label{updatevSampleweightedgauss}
v_i=\frac{\sum_k a_k u_{ik}x_k}{\sum_k a_k u_{ik}}
\end{equation}
\begin{equation}\label{updatealgSampleweightedmm}
\alpha_i=\frac{\sum_k a_k u_{ik}}{\sum_i\sum_k a_k u_{ik}}
\end{equation}
\begin{equation}\label{updatemSampleweightedgauss}
u_{ik}=\frac{\alpha_i\exp(-(m\beta)^{-1}\|x_k-v_i\|^2))}{\sum_i\alpha_i\exp(-(m\beta)^{-1}\|x_k-v_i\|^2))}
\end{equation}
\begin{equation}\label{updateSampleweightedgauss}
a_k=(\sum_{i}\alpha_{i}\exp(-(m\beta)^{-1}\|x_k-v_i\|^2))^m
\end{equation}

Assume the data set $X$ can be represented by an adjacency matrix $A=[A_{kl}]_{n\times n}$. Set $Sim(x_k,\underline{X_i})=\prod_l\theta_{il}^{A_{kl}}$, where $\sum_l \theta_{il}=1$, then (\ref{Sampleweightedsim}) becomes (\ref{Sampleweightedmm}).
\begin{equation}\label{Sampleweightedmm}
\sum_{k} (\sum_{i}\alpha_{i}\prod_l \theta_{il}^{\frac{A_{kl}}{m}})^m
\end{equation}

Set $\forall k, d_k=\sum_l A_{kl}$, the update equations of maximizing (\ref{Sampleweightedmm}) can be written as follows:
\begin{equation}\label{updatevSampleweightedmm}
\theta_{il}=\frac{\sum_k a_k u_{ik}A_{kl}}{\sum_k a_k u_{ik}d_k}
\end{equation}
\begin{equation}\label{updatealmSampleweightedmm}
\alpha_i=\frac{\sum_k a_k u_{ik}}{\sum_i\sum_k a_k u_{ik}}
\end{equation}
\begin{equation}\label{updatemSampleweightedmm}
u_{ik}=\frac{\alpha_i\prod_l\theta_{il}^{\frac{A_{kl}}{m}}}{\sum_i\alpha_i\prod_l\theta_{il}^{\frac{A_{kl}}{m}}}
\end{equation}
\begin{equation}\label{updateSampleweightedmm}
a_k=(\sum_{i}\alpha_{i}\prod_l\theta_{il}^{\frac{A_{kl}}{m}})^m
\end{equation}

In the future, we will study more inequalities based on sample separation axiom and design more new clustering algorithms.

\begin{table*}[t]
	\caption{Clustering algorithms with different cluster compactness criteria}
	\label{cluster compactness algorithms}
	\vskip 0.15in
	\begin{center}
		\begin{small}
			\begin{tabular}{p{7.5cm}p{4.5cm}l}
				\hline
				Algorithm & Cluster compactness criterion & Optimal value\\
				\hline
				C-means~\citep{macqueen1967some}&	 Sum of Squared errors	&  Minimum\\
				Fuzzy c-means	&  Sum of weighted Squared errors 	& Minimum\\
				Single Linkage~\citep{sneath1957application}	& minimum distance between clusters	& Minimum\\
				Complete Linkage~\citep{Srensen1948}& furthest distance between clusters	& Minimum\\
				CNM~\citep{clauset2004finding} &  modularity	& Maximum\\
				General stochastic block model~\citep{shen2011exploring}& likelihood	& Maximum\\
				model based clustering~\citep{fraley2002model} & likelihood	& Maximum\\
				support vector clustering~\citep{ben2002support}& enclosing sphere of Radius  	& Minimum \\
				Nonnegative Matrix Factorization~\citep{xu2003document} & Intra-cluster approximation 	&  Minimum \\
				Addtive clustering~\citep{shepard1979additive}& Intra-cluster approximation 	&  Minimum \\
				Information Based clustering~\citep{slonim2005information} &	Intra-cluster similarity & Maximum  \\
				pairwise data clustering~\citep{hofmann1997pairwise} &	Intra-cluster variance & Minimum \\
				dominant set clustering~\citep{pavan2007dominant} &	Intra-cluster similarity & Maximum \\
				\hline
			\end{tabular}
		\end{small}
	\end{center}
	\vskip -0.1in
\end{table*}

\subsection{Cluster Separation Principle}

If a clustering result $(\underline{X},U,Sim)$ satisfies category separation axiom, then $\forall 1\leq i\neq j\leq c,\underline{X_i} \neq \underline{X_j}$.  For a good clustering result, it is not enough to just satisfy $\forall 1\leq i\neq j\leq c,\underline{X_i} \neq \underline{X_j}$.  Usually, the distance among clusters is expected to be as larger as possible.

Therefore, a clustering criterion should make its clustering result follow cluster separation principle as follows:

{\bf Cluster Separation Principle:} A good clustering result should have the maximum distance among clusters.

Cluster separation principle means that the distance among clusters need to be defined by a clustering algorithm and the distance among clusters should be as larger as possible, which follows that the partition outputted by an algorithm keep away from violating cluster separation axiom as far as possible.

According to the above analysis, it is very natural that a clustering algorithm makes its clustering result satisfy cluster separation axiom. Cluster separation principle requires to develop an clustering criterion of measuring cluster separation.

A general definition of cluster separation criterion can be defined as follows.

{\bf Cluster Separation Criterion:} \\
$J_{S}: \{X\}\times \{\{\underline{X_1},\underline{X_2},\cdots,\underline{X_c}\}| \forall i, \underline{X_i} \; \textrm{represents}\; X_i\} \mapsto R_{+}$ is called cluster separation criterion if the optimum of $J_{S}(X,\{\underline{X_1},\underline{X_2},\cdots,\underline{X_c}\})$ corresponds to the clustering result with maximal cluster separation.

Different applications usually leads to different cluster separation criteria.  Cluster separation criteria can be considered as clustering criteria. For example, suppose that the data set $X$ is represented by a similarity matrix $S(X)$, and  $\forall i, \underline{X_i}=X_i$ and $c=2$,  cut can be defined as (\ref{Minimum cut})~\citep{wu1993optimal}.  Cluster separation criterion requires  to minimize (\ref{Minimum cut}), which is consistent with minimum cut algorithm.
\begin{equation}\label{Minimum cut}
Cut=\sum_{k\in X_1}\sum_{l\in X_2}s_{kl},
\end{equation}

In the literature, the clustering criteria of some clustering algorithms are indeed developed based on cluster separation principle, such as clustering algorithms in Table \ref{cluster separation algorithms}, including Girvan and Newman's algorithm  (GN algorithm)~\citep{girvan2002community}, ratio cut~\citep{wei1989towards}, normalized cut~\citep{shi2000normalized},Maximum Margin Clustering~\citep{ben2002support}, and so on.

\begin{table*}[t]
	\caption{Clustering algorithms with different cluster separation criteria}
	\label{cluster separation algorithms}
	\vskip 0.15in
	\begin{center}
		\begin{small}
			\begin{tabular}{lll}
				\hline
				Algorithm &Cluster separation criterion  & Optimal value\\
				\hline
				GN algorithm~\citep{girvan2002community} &	Betweenness & Maximum  \\
				Minimum cut~\citep{wu1993optimal}&	Cut 	& Minimum  \\
				Ratio cut~\citep{wei1989towards}&	Ratio Cut 	& Minimum  \\
				Normalized cut~\citep{shi2000normalized}	& Normalized cut	& Minimum \\
				Maximum margin clustering~\citep{xu2004maximum}& Margin	&Maximum \\
				\hline
			\end{tabular}
		\end{small}
	\end{center}
	\vskip -0.1in
\end{table*}

\subsection{On Clustering Principles}

In the literature, many authors tried to define cluster by internal cohesion (homogeneity) and external
isolation (separation),  as~\citep{Everitt2011cluster} have declared. For example,~\citep{jain2010data} said that an ideal cluster can be defined as a set of points that is compact and isolated, more detailed,
~\citep{bonner1964some}  defined tight clusters by maximal complete subgraphs of the similarity matrix graph;~\citep{abell1958distribution} defined compact cluster as at least 50 members are within a given radial distance of the cluster center. Transparently, the idea of cluster compactness and cluster separation has been used to develop clustering algorithm in the literature, but most definitions about compactness or separation are originated from intuition and applications. In this paper, cluster compactness principle and cluster separation principle have been clearly followed by separation axioms. Moreover, some cluster compactness criteria can be followed by sample separation axiom.

In addition, cluster separation principle has something to do with cluster compactness principle. In special case,it can be proved that cluster separation principle are equivalent to cluster compactness principle. For example, a famous result about C-means can be rewritten as follows.

Let $X=\{x_1,x_2,\cdots,x_n\}$  be the given data set, and $\forall k, x_k \in R^r$, where  $r \in N$, and $r>0$, and $c$ groups are represented by $v_1,v_2,\cdots,v_c$  respectively, the compactness of cluster $X_i$ can be defined as  $\Sigma_{k}u_{ik}\|x_k-v_i\|^2$. According to cluster compactness principle, a good partition should correspond to minimizing the sum of the compactness of each cluster as  $\Sigma_i\Sigma_{k}u_{ik}\|x_k-v_i\|^2$.

According to cluster separation principle, a good partition should correspond to maximizing the distance among clusters.  If the distance among clusters is defined as $\Sigma_i\Sigma_{k}u_{ik}\|v_i-\overline{x}\|^2$, where $v_i=\frac{\Sigma_{k}u_{ik}x_k}{\Sigma_{k}u_{ik}}$, $\overline{x}=\frac{\Sigma_{k}x_k}{n}$.  Transparently, maximizing the above distance among clusters  indeed is far away from violating separation axioms as $\forall i, v_i=\overline{x}$ is a totally coincident clustering and violates separation axioms.
It is well known that $\sum_i\sum_{k}u_{ik}\|v_i-\overline{x}\|^2+\sum_i\sum_{k}u_{ik}\|x_k-v_i\|^2=\sum_{k}\|x_k-\overline{x}\|^2$ where $\sum_{i=1}^{c} u_{ik}=1, u_{ik} \in \{0,1 \}, \forall i, 1\leq \sum_{k=1}^{n} u_{ik} < n$, details can be seen in page 95 of the book~\citep{AKJian83bookclustering}.

Therefore,
maximizing the distance among clusters $\Sigma_i\Sigma_{k}u_{ik}\|v_i-\overline{x}\|^2$ is equivalent to minimizing  the sum of the compactness of each cluster as  $\Sigma_i\Sigma_{k}u_{ik}\|x_k-v_i\|^2$. In other words, cluster compactness principle is equivalent to cluster separation principle under the above circumstance.

However, cluster separation principle is usually not equivalent to cluster compactness principle. Sometimes, cluster separation criterion and cluster compactness criterion can be combined into a clustering criterion in the literature.  For example,~\citep{ozdemir2001fuzzy} proposed the ICS algorithm, its clustering criterion can be expressed as:
$$
J_{ICS}(U,V)=\frac{1}{n}\sum_i\sum_{k}u_{ik}^m\|v_i-x_k\|^2-\frac{\gamma}{c}\sum_t\|v_i-v_t\|^2
$$
Clearly, the first term of the clustering criterion of ICS reflects cluster compactness principle, the second term of the clustering criterion of ICS is consistent with cluster separation principle.

Generally speaking, when designing a clustering criterion, cluster separation principle and cluster compactness principle are seldom used simultaneously,  as it has much more computational complexity when considering cluster separation principle and cluster compactness principle together.

\section{Separation Axioms and Cluster Validity}

According to~\citep{LiuLiXiong2013IEEETC}, when designing a cluster validity index, two ideas are widely accepted,one is to measure how distinct or well separated a cluster is from other clusters (which is called separation criterion),  consistent with our proposed category separation axiom; the other is to measure how compact a cluster itself is (which is called compact criterion), consistent with sample separation axiom. Such two criteria are consistent with cluster separation principle and cluster compactness principle.

Furthermore, a clustering result violating separation axioms can not be considered to have the best clustering quality. Consequently, clustering results close to those improper clustering results should be regarded as not as good as those far from the improper clustering results. Such a view is useful for designing a cluster validity index, which leads to another principle of designing a cluster validity index based on separation axioms as follow:

{\bf Extreme Value Principle:}  A good cluster validity index should evaluate improper clustering results as the poorest clustering quality.

In the literature, there exists many cluster validity indices only consistent with one or two of the above three principles. For instance, partition coefficient $V_{pc}=\frac{1}{n}\sum_i\sum_k u_{ik}^2$~\citep{bezdek1974cluster} and partition entropy  $V_{pe}=\frac{1}{n}\sum_i\sum_k u_{ik}logu_{ik}$~\citep{bezdek1975mathematical} only reflect cluster compactness principle and Extreme Value Principle. The fuzzy hypervolume validity~\citep{gath1989unsupervised} only reflects cluster compactness principle. However, different from clustering criterion design, many cluster validity indices in Table \ref{validity_indices} are usually developed based on the above three principles together.

\begin{table*}[t]
	\caption{Several cluster validity indices based on separation axioms}
	\label{validity_indices}
	\vskip 0.15in
	\begin{center}
		\begin{small}
			\begin{tabular}{lp{2.5cm}}
				\hline
				Index & Optimal number of clusters\\
				\hline
				\\
				$Kwon=\frac{\sum_{i=1}^{n} \sum_{k=1}^{n} u_{ik}^2 \| x_k-v_i \|^2+c^{-1}\sum_{i=1}^{c} \|v_i-\overline{x}\|^2}{n \times \min_{i \neq j} \| v_i-v_j \|^2}$~\citep{kwon1998cluster}
				&Minimum \\
				\\

				$V_P=\frac{1}{n}(\sum_{k} \max(u_{ik})-\frac{2}{c(c-1)}\sum_{i=1}^{c-1}\sum_{j=i+1}^c \sum_k\min(u_{ik},u_{jk}))$~\citep{chen2001rule}
				&Maximum\\
				\\
				$DB=\frac{1}{nc}\sum_i\max_{j\neq i}\{\frac{\sum_{x_k\in X_i}d(x_k,v_i)}{d(v_i,v_j)\times n_i}+\frac{\sum_{x_k\in X_j}d(x_k,v_j)}{d(v_i,v_j)\times n_j}\}$~\citep{davies1979cluster}
				&Minimum\\
				\\
				$FS=\sum_{i=1}^{c} \sum_{k=1}^{n} u_{ik}^m \| x_k-v_i \|^2- \sum_{i=1}^{c} \sum_{k=1}^{n} u_{ik}^m\| v_i-\overline{v} \|^2$
				~\citep{fukuyama1989new}
				&Minimum\\
				\\
				$Silhouette=\frac{1}{nc}\sum_i \{\sum_{x_k\in X_i}\frac{b(x_k)-a(x_k)}{n_i\times \max (b(x_k),a(x_k)) }\}$
				~\citep{rousseeuw1987silhouettes}
				&Maximum \\
				\\
				$I=\frac{\max_{i, j}d(v_i,v_j)\times \sum_k d(x_k,\overline{x})}{nc\times \sum_i\sum_{x_k \in X_i} d(x_k,v_i)}$
				~\citep{maulik2002performance}
				&Maximum\\
				\\
				
				$Dunn=\min_i\min_j\frac{\min_{x_k\in X_i, x_l\in X_j d(x_k,x_l)}}{\max_i{\max_{x_k\ x_l\in X_i} d(x_k,x_l)}}$
				~\citep{dunn1974well}
				&Maximum\\
				\\
				$CH(X,v,c)=\frac{(n-c)\sum_{i=1}^{c} \sum_{k=1}^{n} u_{ik}^2 \| v_i-\overline{x} \|^2}{(c-1)\sum_{i=1}^{c} \sum_{k=1}^{n} u_{ik}^2 \| v_i-x_k \|^2}$
				~\citep{calinski1974dendrite}
				&Maximum\\
				
				\hline
			\end{tabular}
		\end{small}
	\end{center}
	\vskip -0.1in
\end{table*}

As an example, we will show that Xie-Beni Index indeed is consistent with the proposed principles of designing a cluster validity index~\citep{xie1991validity}.

Xie-Beni Index is a well known cluster validity index for fuzzy C-means, which is defined as (\ref{XBindex}).
\begin{equation}
\label{XBindex}
XB(X,U,V)=\frac{\sum_{i=1}^{c} \sum_{k=1}^{n} u_{ik}^2 \| x_k-v_i \|^2}{n \times \min_{i \neq j} \| v_i-v_j \|^2}
\end{equation}

The larger $XB(X,U,V)$, the worse clustering result $(U,V)$. The smaller $XB(X,U,V)$, the better clustering result $(U,V)$. Transparently, coincident partition will make $XB(X,U,V)$  approach infinity and thus should be considered as an improper  clustering result.  In $XB(X,U,V)$, the numerator represents the compactness of the partition,the denominator represents the separation degree of the partition. Obviously,$XB(X,U,V)$ simultaneously considers cluster compactness, cluster separation and extreme value principle.  Xie-Beni index was not originated from separation axioms but is naturally consistent with our proposed separation axioms.  It needs to point out that many recently developed cluster validity indices are consistent with the proposed principles, for example, CVNN~\citep{LiuLiXiong2013IEEETC} considers cluster compactness and cluster separation together.

\section{Discussions and Conclusions}

In this paper, clustering results are axiomatized according to elementary requirements of cluster analysis. It it is pointed out that a clustering result should satisfy categorization equivalency principle, sample separation axiom and cluster separation axiom by formalizing the representation of clustering results, otherwise, the elementary requirements of cluster analysis are violated. The proposed separation axioms not only classify clustering results into proper clustering, overlapping clustering and  improper clustering, but also classify partition into proper partition, overlapping partition and improper partition. Improper clustering includes coincident clustering, totally coincident clustering. Improper partition includes  covering partition, coincident partition, uninformative partition and absolute uninformative partition.  What's more, several inequalities on clustering results are also obtained based on sample separation axiom. In addition,  sample separation axioms are consistent with two categorization theories in cognitive science.

Based on the common properties of clustering results, we proposed three principles of developing a clustering algorithm, such as categorization equivalency principle, cluster compactness principle and cluster separation principle. Based on categorization equivalency principle, a general framework of iterative clustering algorithm is proposed based on the iteration between cognitive category representation and extensional clustering representation. Based on sample separation axiom, cluster compactness principle not only offers some new interpretations for C-means, Model based clustering, but also results in new clustering algorithms. Based on category separation axiom, cluster separation principle is proposed to design clustering criterion. Many popular clustering algorithms follow the proposed principles, although they are not originally developed based on the proposed axioms.  Moreover, a good clustering result should be kept far away from violating the proposed separation axioms. Such idea leads to three principles of designing a cluster validity index, including cluster separation principle, cluster compactness principle and extreme value principle, but such three principles are usually taken into account together for designing a cluster validity index.

In the future, we need to investigate whether the proposed axioms can introduce more interesting properties about clustering results like Theorem \ref{Simprodsum} and \ref{dSimprodsum}, and develop some new clustering algorithm or cluster validity index. Furthermore, it is worth  further study whether or not it is feasible for axiomatizing machine learning.

\subsubsection*{Acknowledgements}

We thank Xinbo Gao, Wensheng Zhang, Baogang Hu, Jiangshe Zhang and Liping Jing, their valuable discussions and suggestions have greatly improved the presentation of this paper. This work was supported by the NSFC grants (61033013,61370129), Ph.D Programs Foundation of Ministry of Education of China(20120009110006), PCSIRT(IRT 201206), Beijing Committee of Science and Technology, China(Grant No. Z131110002813118).

\bibliographystyle{apa}
\bibliography{bibfile}

\end{document}